\title[Hierarchical Neuro-Symbolic Decision Transformer]{Hierarchical Neuro-Symbolic Decision Transformer}
\begin{document}

\maketitle

\begin{abstract}
We present a hierarchical neuro-symbolic control framework that tightly couples a classical symbolic planner with a transformer-based policy to address long-horizon decision-making under uncertainty. At the high level, the planner assembles an interpretable sequence of operators that guarantees logical coherence with task constraints, while at the low level each operator is rendered as a sub-goal token that conditions a decision transformer to generate fine-grained actions directly from raw observations. This bidirectional interface preserves the combinatorial efficiency and explainability of symbolic reasoning without sacrificing the adaptability of deep sequence models, and it permits a principled analysis that tracks how approximation errors from both planning and execution accumulate across the hierarchy. Empirical studies in stochastic grid-world domains demonstrate that the proposed method consistently surpasses purely symbolic, purely neural and existing hierarchical baselines in both success and efficiency, highlighting its robustness for sequential tasks.
\end{abstract}


\section{Introduction}

The integration of symbolic reasoning with data-driven control mechanisms has become increasingly important in advancing the capabilities of autonomous agents. Symbolic planning, which encodes logical and relational knowledge about tasks, excels in structuring long-term strategies and providing interpretable solutions with formal performance guarantees. In contrast, data-driven or neural network models demonstrate remarkable proficiency in learning flexible, reactive behaviors from raw, high-dimensional input. However, existing neuro-symbolic frameworks only couple the two paradigms in a limited or \say{shallow} manner, for example, using symbolic rules to initialize a neural policy or interpreting the outputs of a trained network through symbolic post hoc analysis. Such approaches fall short when a task demands logically consistent high-level planning and low-level adaptation to uncertainty.

This paper introduces a \emph{hierarchical neuro-symbolic decision transformer} that unifies high-level symbolic planning with a transformer-based low-level policy. The foundation of the proposed approach is a bidirectional interface that connects a discrete symbolic planner to a decision transformer, enabling the planner to establish a logically sound sequence of operators while allowing the neural policy to refine these operators into reactive, fine-grained actions. By translating symbolic operators into sub-goals for the decision transformer and, conversely, abstracting raw environment states back into symbolic predicates, we preserve the interpretability and combinatorial efficiency of symbolic reasoning without sacrificing the adaptability and representational breadth of deep neural models. We outline the structural components of our approach, provide a rigorous analysis of how approximate errors from the symbolic and neural layers accumulate, and empirically validate the method in grid-based environments with increasing complexity. Our results show that the hierarchical neuro-symbolic decision transformer substantially outperforms several baseline policies across key measures such as success rate and sample efficiency, particularly when tasks demand multiple steps, complex state transitions, and logical constraints.

\subsection{Related Work}

\paragraph{Symbolic Planning.} Symbolic planning has long been a cornerstone of artificial intelligence (AI) for tackling multi-step decision-making tasks \citep{konidaris2014constructing}. In this paradigm, problems are formulated using abstract state representations, logical predicates, and operators encapsulated in languages such as STRIPS \citep{fikes2strips} and PDDL \citep{aeronautiques1998pddl}. Classical planners then search for a sequence of operators to satisfy predefined goals. Despite theoretical guarantees of completeness and optimality, purely symbolic systems struggle when confronted with uncertain or high-dimensional environments, as they rely on hand-crafted abstractions and assume deterministic transitions \citep{behnke2024symbolic}.

\paragraph{Hierarchical Reinforcement Learning.} Hierarchical reinforcement learning extends reinforcement learning techniques by structuring policies into multiple levels of abstraction \citep{pateria2021hierarchical,hutsebaut2022hierarchical}. Early frameworks such as options \citep{sutton1999between} and the feudal paradigm \citep{dayan1992feudal} introduced the notion of temporally extended actions (or sub-policies) for improving exploration and scalability in long-horizon tasks. Goal-conditioned reinforcement learning approaches further refine hierarchical reinforcement learning by conditioning policies on sub-goals \citep{nasiriany2019planning}. While these approaches reduce the search complexity at the lower level, they do not incorporate symbolic knowledge, thus lacking explicit logical constraints or interpretability.

\paragraph{Transformer-Based Reinforcement Learning.} Motivated by the success of transformers in sequence modeling for natural language processing \citep{vaswani2017attention,gillioz2020overview}, researchers have proposed transformer-based architectures for reinforcement learning and control \citep{chen2021decision,hong2021structure}. Decision transformers, in particular, reinterpret reinforcement learning as a conditional sequence modeling problem, in which trajectories are generated by predicting actions given desired returns-to-go \citep{chen2021decision}. These methods have demonstrated notable results in various benchmark tasks, using large-scale pre-training paradigms. However, purely transformer-based reinforcement learning approaches often rely on scalar performance objectives (e.g., returns) for conditioning the policy, which may not capture the structural or relational aspects of complex tasks \citep{paster2022you}.

\paragraph{Neuro-Symbolic Approaches.} Neuro-symbolic AI attempts to bridge the gap between high-level symbolic reasoning and low-level neural processing \citep{garcez2023neurosymbolic}. Early work in this area focused on the integration of logic-based knowledge into neural networks for improved interpretability and knowledge transfer, exemplified by approaches to neural-symbolic rule extraction, knowledge distillation, and hybrid reasoning \citep{zhou2003extracting,west2021symbolic}. More recent advances have explored how to integrate symbolic constraints into end-to-end differentiable architectures, for instance by encoding logical rules as differentiable loss functions \citep{xu2018semantic} or combining neural perception with symbolic program synthesis \citep{li2023scallop}. However, many of these methods are applied to static tasks such as classification or structured prediction, rather than sequential decision-making under uncertainty.

\paragraph{Hybrid Planning and Learning Frameworks.} Several attempts have been made to unify symbolic planning with learned policies in a hierarchical manner. One line of work uses classical planners to outline a high-level plan while a low-level controller handles continuous actions \citep{kaelbling2011hierarchical,garrett2020pddlstream}. These systems handcraft or discretize “symbolic operators” to reflect possible sub-goals in the real environment \cite{garrett2020pddlstream}. Although such methods can inherit the interpretability of symbolic plans, they often lack the flexibility of data-driven adaptation, as the mapping from symbolic operator to low-level action is usually static or heuristically engineered. Another line of research uses symbolic planning for high-level structure and relies on deep reinforcement learning for subtask policies \citep{lyu2019sdrl,kokel2021deep}.

\paragraph{Hierarchical Modeling and Model-based Reinforcement Learning.} Our proposed framework is also related to hierarchical modeling and control methods, which break tasks into sub-tasks \citep{bafandeh2018comparative}. Model-based reinforcement learning similarly uses explicit environment models to predict transitions and perform planning or policy refinement \citep{moerland2023model,kidambi2020morel,baheri2020vision}. Although hierarchical and model-based methods explicitly capture task structures and dynamics, they require accurate environment models or manually defined hierarchies. In contrast, our hierarchical neuro-symbolic decision transformer learns adaptive low-level controls guided by symbolic abstractions, thus benefiting from the strengths of both structured modeling and data-driven adaptability.

\noindent {\textbf{Rationale for Transformer-based Architecture.} While RNN-based architectures and recent linear-complexity models such as Mamba \cite{gu2023mamba} offer computational advantages, we specifically choose transformers for our hierarchical neuro-symbolic framework for two key reasons. First, the transformer's parallel attention mechanism enables reasoning over the entire context window, allowing the model to dynamically relate current states to past sub-goal transitions—a capability crucial for bridging symbolic operators with low-level execution. Second, our sub-goal conditioning naturally leverages the transformer's ability to treat sub-goals as additional tokens that modulate attention across the trajectory, whereas RNNs would require architectural modifications to achieve similar conditioning effects. 

\noindent \textbf{Our Contribution.} We make the following contributions:

\begin{itemize}
    \item We propose a novel architecture that unifies symbolic planning with a transformer-based policy, thereby enabling high-level logical task decomposition alongside low-level control.
    \item We derive tight hierarchical regret and PAC sub-goal bounds that quantify how symbolic-planning error, neural-execution error and failure probability jointly affect overall performance.
    \item Through numerical evaluations, we show that the proposed approach outperforms purely end-to-end neural baselines, achieving higher success rates and improved sample efficiency in tasks with long-horizon dependencies.
\end{itemize}

\noindent\textbf{Paper Organization.} The rest of this paper is organized as follows. Section 2 formalizes our proposed hierarchical neuro-symbolic control framework, formalizing the bidirectional mapping between symbolic planning and transformer-based execution. Section 3 establishes theoretical results and Section 4 presents empirical evaluations in grid-based environments, followed by concluding remarks.

\section{Preliminaries: Decision Transformers}

The decision transformer reframes policy learning as a sequence modeling problem, where states, actions, and (optionally) rewards are treated as tokens in an autoregressive prediction pipeline. Unlike conventional reinforcement learning approaches, which seek to optimize a value function, the decision transformer aims to imitate offline trajectories that demonstrate desired behaviors. This makes it well-suited for offline or sub-goal-conditioned settings where trajectories can be segmented according to higher-level operator structures.

Given a sequence of recent tokens---consisting of states $\left\{s_t\right\}$, actions $\left\{a_t\right\}$, and possibly reward or return annotations $\left\{r_t\right\}$---the decision transformer applies a decoder-only transformer stack to predict the next action from the current context. The entire sequence is embedded into a common dimensional space, augmented with positional encodings, and passed through multiple layers of multi-head self-attention and feed-forward transformations. At each time step $t$, the decision transformer produces a distribution over actions $\hat{a}_t=\mathcal{T}_\theta\left(\tau_t\right)$, where $\tau_t$ captures the tokens for all steps up to $t$. By training on offline trajectories that exhibit near-optimal or sub-task specific behavior, the decision transformer is able to generalize these patterns to novel conditions. To direct the decision transformer towards particular sub-tasks it is common to introduce an additional token representing the sub-goal. This sub-goal token is appended to the recent trajectory tokens that enables the model to adapt its predicted action distribution to fulfill the specified local objective. A decision transformer is typically trained offline on a dataset of state–action–reward tuples. If sub-goal labels are available, either from human annotations or from automatically segmented trajectories---they can be included as additional supervision, conditioning the decision transformer’s predictions on the relevant context.

\section{Methodology}

We formalize our approach within a hybrid hierarchical framework that integrates symbolic planning at a high level with a transformer-based low-level controller, here instantiated as a decision transformer. The overarching goal is to use the global logical consistency afforded by symbolic planners while exploiting the representational and sequence-modeling capabilities of a large-scale neural network for local control and refinement. We assume an underlying Markov decision process (MDP) defined by the tuple $(\mathcal{S}, \mathcal{A}, f, R)$, where $\mathcal{S}$ denotes the state space, $\mathcal{A}$ represents the action space, $f: \mathcal{S} \times \mathcal{A} \rightarrow \mathcal{S}$ characterizes the system dynamics, and $R: \mathcal{S} \times \mathcal{A} \rightarrow \mathbb{R}$ defines the reward function. To enable symbolic planning, we define a symbolic domain $\langle\mathcal{P}, \mathcal{O}\rangle$, where $\mathcal{P}$ is a finite set of propositions (or predicates) representing high-level relational facts about the environment, and $\mathcal{O}$ is a set of symbolic operators describing permissible high-level actions. We introduce an abstraction $\operatorname{map} \phi: \mathcal{S} \rightarrow 2^{\mathcal{P}}$, which assigns to each low-level state $s \in \mathcal{S}$ the subset of propositions that hold in $s$. Each operator $o \in \mathcal{O}$ is specified by its preconditions and effects, i.e., $\operatorname{pre}(o) \subseteq \mathcal{P}$ and $\operatorname{eff}(o) \subseteq \mathcal{P}$. The symbolic planner treats the environment as a discrete state-space search over subsets of $\mathcal{P}$. Given an initial state $s_0$ and an associated initial symbolic state $\phi\left(s_0\right) \subseteq \mathcal{P}$, as well as a set of goal propositions $G \subseteq \mathcal{P}$, the planner seeks a finite sequence of operators $\left(o_1, \ldots, o_K\right)$ such that for each $i$, $\operatorname{pre}\left(o_i\right)$ it is satisfied by the current symbolic state and, applying $\operatorname{eff}\left(o_i\right)$, the symbolic state transitions accordingly. Formally, the planner solves a combinatorial optimization problem:

$$
\min _{\left(o_1, \ldots, o_K\right)} \sum_{i=1}^K c\left(o_i\right) \quad \text { subject to } \quad \operatorname{pre}\left(o_i\right) \subseteq \phi_i, \quad \phi_{i+1}=\phi_i \cup \operatorname{eff}\left(o_i\right), \quad \phi_K \supseteq G
$$
where $c\left(o_i\right)$ is a cost function (frequently uniform) to select operator $o_i$, and $\phi_i$ denotes the symbolic state after applying $o_1, \ldots, o_{i-1}$. This procedure yields a sequence of high-level actions, each of which must be realized in the low-level environment as a continuous or fine-grained action sequence in $\mathcal{A}^*$. \figureref{fig:main} illustrates the overall idea of how the symbolic layer and the transformer-based controller interface.

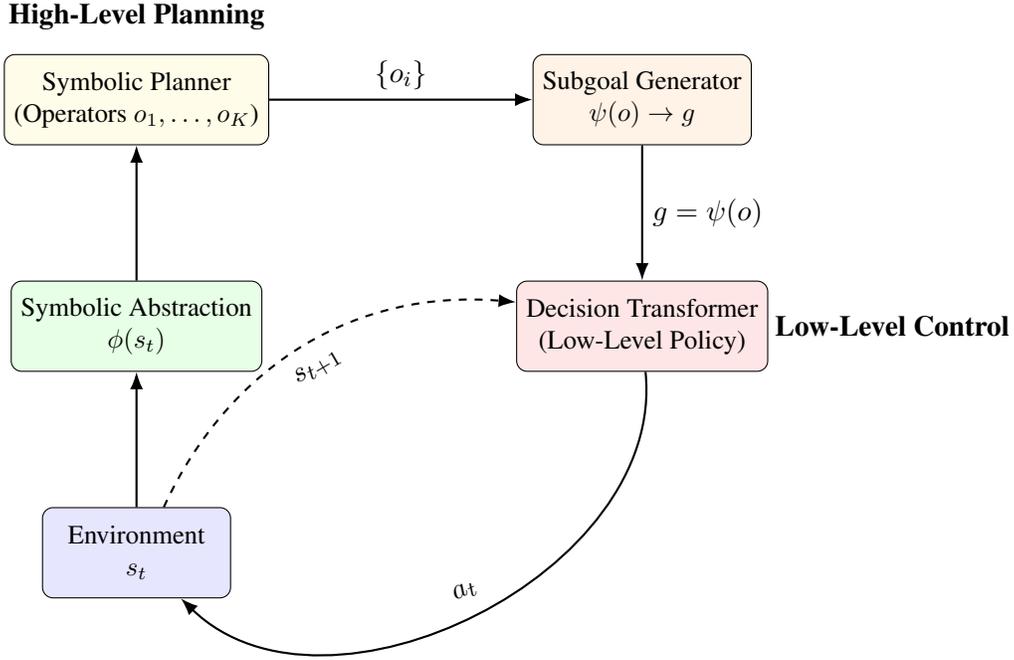
\begin{figure}[t]
    \centering
    \begin{tikzpicture}[
        node distance=1.8cm and 2cm,
        block/.style={draw, rectangle, rounded corners, align=center, minimum height=1.2cm, minimum width=2.5cm, font=\small},
        arrow/.style={->, thick, >=Latex},
        dashedarrow/.style={->, thick, dashed, >=Latex}
    ]
    \node[block, fill=blue!10] (env) {Environment\\ \(s_t\)};
    \node[block, fill=green!10, above=of env] (abstraction) {Symbolic Abstraction\\ \(\phi(s_t)\)};
    \node[block, fill=yellow!10, above=of abstraction] (planner) {Symbolic Planner\\ (Operators \(o_1,\dots,o_K\))};
    \node[block, fill=orange!10, right=3.5cm of planner] (subgoal) {Subgoal Generator\\ \(\psi(o) \to g\)};
    \node[block, fill=red!10, below=of subgoal] (dt) {Decision Transformer\\ (Low-Level Policy)};
    \draw[arrow] (env) -- (abstraction);
    \draw[arrow] (abstraction) -- (planner);
    \draw[arrow] (planner) -- (subgoal) node[midway, above] {\(\{o_i\}\)};
    \draw[arrow] (subgoal) -- (dt) node[midway, right] {\(g=\psi(o)\)};
    \draw[arrow] (dt) to[bend left=70] node[midway, above=2pt, sloped] {\(a_t\)} (env);
    \draw[dashedarrow] (env) to[bend left=35] node[midway, below=2pt, sloped] {\(s_{t+1}\)} (dt);
    \node[align=center, font=\bfseries] at ($(planner.north)+(0,0.5)$) {High-Level Planning};
    \node[align=center, font=\bfseries] at ($(dt.east)+(1.65,0)$) {Low-Level Control};
    \end{tikzpicture}
    \caption{Hierarchical Neuro-Symbolic Control Framework}
    \label{fig:main}
\end{figure}

Our main contribution lies in the refinement of each symbolic operator into a low-level action plan using a decision transformer. To bridge the gap between a symbolic operator $o_i$ and the corresponding low-level actions, we employ a function $\psi: \mathcal{O} \rightarrow \mathcal{G}$ that maps operators to sub-goals in a continuous (or otherwise fine-grained) sub-goal space $\mathcal{G}$. For example, if $o_i$ means \say{Pick up object $A$}, then $\psi\left(o_i\right)$ could encode the required end-effector pose of the agent or the specific location that the agent must reach and grasp. Once the symbolic plan $\left(o_1, \ldots, o_K\right)$ is established, we begin the execution of each operator $o_i$ by conditioning the decision transformer on the sub-goal $g_i=\psi\left(o_i\right)$. The decision transformer $\mathcal{T}_\theta$ processes a context window of recent states, actions, rewards, and the current sub-goal, producing a policy for the next low-level action. Concretely, if $\tau_t$ denotes the trajectory tokens up to time $t$ (including states $s_1, \ldots, s_t$, actions $a_1, \ldots, a_t$, and possibly rewards or returns $r_1, \ldots, r_t$), then the decision transformer predicts the subsequent action $a_{t+1}$ by:

$$
a_{t+1}=\mathcal{T}_\theta\left(\tau_t, g_i\right)
$$
The process repeats until the sub-goal is declared achieved (e.g., $\phi\left(s_{t+\Delta}\right) \supseteq \operatorname{eff}\left(o_i\right)$), or until a failure condition arises that necessitates replanning at the symbolic level. This two-tier approach thus interleaves symbolic operators, which encapsulate high-level task structure, and learned low-level policies. The decision transformer is trained offline to maximize the probability of producing successful trajectories that satisfy sub-goals. Given a dataset $\mathcal{D}=\left\{\tau^{(m)}\right\}$ of trajectories, where each $\tau^{(m)}$ is segmented according to sub-goals, the learning objective takes the form:

$$
\min _\theta \mathbb{E}_{\tau \in \mathcal{D}}\left[-\log p_\theta\left(a_t \mid s_{\leq t}, a_{\leq t-1}, g_{\leq t}\right)\right]
$$
where $g_{\leq t}$ may encode either a numeric return-to-go or a symbolic sub-goal. In practice, an agent can also be fine-tuned online through reinforcement learning, continually refining $\theta$ to increase sub-goal completion rates or maximize an externally provided reward signal.

\section{Theoretical Results}

\begin{theorem}[Hierarchical Performance Bound]
\label{thm:perf}
Let $0<\gamma<1$ be the discount factor of the underlying MDP, and let $V^{\!*}$ denote the optimal value function.  Suppose the symbolic planner produces a plan whose expected cost is at most $\epsilon_{\text{sym}}$ above the optimal symbolic cost. Assume further that the decision transformer executes each operator $o_i$ so that, conditioned on $o_i$, the expected discounted cost satisfies
\[
 \mathbb{E}\bigl[ C_i \mid o_i \bigr] \;\le\; C_i^{\!*} + \epsilon_{\text{exec}},
\]
where $C_i^{\!*}$ is the optimal cost for realizing $o_i$.  Let $\rho$ be an upper bound on the probability that the low‑level policy fails to accomplish $o_i$, incurring an additional single‑step cost bounded by $B$.  Then the hierarchical policy $\pi^{\text{hyb}}$ satisfies
\[
 \bigl\| V^{\pi^{\text{hyb}}} - V^{\!*} \bigr\|_{\infty} \;\le\;
 \frac{\epsilon_{\text{sym}}}{1-\gamma}\;+
 \frac{\epsilon_{\text{exec}}}{(1-\gamma)^2}\;+
 \frac{\rho B}{(1-\gamma)^2}.
\]
\end{theorem}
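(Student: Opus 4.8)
The plan is to split the total value error into three contributions that can be bounded independently — the suboptimality injected by the symbolic planner, the excess cost accrued by the decision transformer while realizing each operator, and the cost of low-level failures — and then reassemble them with the triangle inequality. Concretely, I would introduce an intermediate value function $V^{\text{plan}}$ corresponding to executing the planner's operator sequence $(o_1,\dots,o_K)$ with an idealized, optimal low-level controller, and write
\[
\bigl\|V^{\pi^{\text{hyb}}} - V^{\!*}\bigr\|_{\infty}
\;\le\;
\bigl\|V^{\!*} - V^{\text{plan}}\bigr\|_{\infty}
\;+\;
\bigl\|V^{\text{plan}} - V^{\pi^{\text{hyb}}}\bigr\|_{\infty}.
\]
The first term isolates everything attributable to the planner; the second isolates everything attributable to the neural execution layer, including failures.

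For the first term, I would argue that an $\epsilon_{\text{sym}}$ gap in symbolic cost propagates through the discounted return as a standard per-stage error: writing a hierarchical Bellman-type recursion over symbolic states and iterating it (equivalently, summing the discounted per-epoch contributions along a rollout, including any replanning epochs), the $\epsilon_{\text{sym}}$ overhead appears once per epoch and the geometric series of discount factors collapses to the factor $1/(1-\gamma)$, giving $\|V^{\!*}-V^{\text{plan}}\|_\infty \le \epsilon_{\text{sym}}/(1-\gamma)$. For the second term, I would unroll the operator sequence: conditioned on the hybrid policy reaching operator $o_i$, the transformer pays at most $\epsilon_{\text{exec}}$ in excess expected discounted cost relative to $C_i^{\!*}$, and with probability at most $\rho$ it additionally incurs the bounded single-step failure cost $B$, so the per-operator excess is at most $\epsilon_{\text{exec}} + \rho B$. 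Because each operator is itself realized over a possibly long, variable-length run of low-level steps, the within-operator discounting already contributes a factor $1/(1-\gamma)$, and summing the contributions over the successive operators contributes a second factor $1/(1-\gamma)$; this yields $\|V^{\text{plan}} - V^{\pi^{\text{hyb}}}\|_\infty \le (\epsilon_{\text{exec}} + \rho B)/(1-\gamma)^2$. Adding the two bounds gives exactly the stated inequality.

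The main obstacle I anticipate is the discount bookkeeping at the interface between the two levels: one has to set up the per-operator and per-low-level-step accounting so that they compose correctly and the powers of $(1-\gamma)$ come out precisely as $1$ for the planner term and $2$ for the execution and failure terms — in particular, making sure the planner error is not inadvertently amplified a second time inside each operator, and, conversely, that the execution error genuinely picks up the extra factor because it lives over an entire operator horizon. A related subtlety is keeping every bound uniform in the plan length $K$ and in the number of low-level steps consumed per operator, which is exactly where discounting (rather than any structural assumption on $K$) carries the argument; and one must handle the failure-and-replanning events without double-counting, by charging a failure the bounded cost $B$ at the step it occurs and absorbing the subsequent continuation into the same recursive estimate, so that failures stay at the $1/(1-\gamma)^2$ order rather than compounding.
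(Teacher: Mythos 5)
Your proposal follows essentially the same route as the paper's Appendix B proof: the same intermediate policy (executing the planner's sequence with an optimal low-level controller), the same triangle-inequality split, the planner gap amplified by $1/(1-\gamma)$ via a performance-difference argument, and the per-operator execution and failure excess $\epsilon_{\text{exec}}+\rho B$ picking up the extra $1/(1-\gamma)$ from within-operator discounting plus summation over operators. The discount-bookkeeping concerns you flag are precisely where the paper's own argument is also at its most informal, so your plan is faithful to the published derivation.
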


\begin{proof}[Sketch]
Decompose the regret into (i) the planning gap introduced by the symbolic layer and (ii) the execution deviation introduced by the decision transformer.  The planning gap is amplified by at most $1/(1-\gamma)$ via the performance‐difference lemma. Execution deviations accumulate geometrically and contribute at most $\epsilon_{\text{exec}}/(1-\gamma)^2$. Failure events occur with probability at most $\rho$ and are similarly expanded through the discounted horizon, yielding the stated bound. A full derivation appears in Appendix~B.
\end{proof}\begin{theorem}[PAC Sub‑goal Completion]
\label{thm:pac}
Let the decision transformer be trained offline on $N$ i.i.d. trajectories such that each operator $o\in\mathcal{O}$ appears in at least $m$ training segments. Assume the hypothesis class implemented by the transformer has VC‐dimension $d$. Then for any $\delta\in(0,1)$, with probability at least $1-\delta$ over the training data the learned low‑level policy satisfies
\[
 \Pr\bigl[\text{all }K\text{ sub‑goals succeed}\bigr]
 \;\ge\;
 1 
 - \sqrt{\frac{2d\,\log\!\bigl(\tfrac{e m}{d}\bigr) + 2\,\log\!\bigl(\tfrac{4K}{\delta}\bigr)}{m}}.
\]
Consequently, the failure probability decays as $\tilde{\mathcal{O}}\bigl(\sqrt{d/N}\bigr)$.
\end{theorem}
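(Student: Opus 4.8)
The plan is to reduce the joint $K$-sub-goal guarantee to a per-operator generalization statement proved by classical VC theory, and then to recombine the per-operator bounds across the plan by union bounds. First I would fix an operator $o\in\mathcal{O}$ and regard the event ``the decision transformer, conditioned on $g=\psi(o)$, drives the agent into a state $s$ with $\phi(s)\supseteq\operatorname{eff}(o)$ within the segment horizon'' as a $\{0,1\}$-valued outcome determined by the transformer parameters $\theta$ and the segment's initial state. As $\theta$ ranges over the trained hypothesis class this induces a family of Boolean functions of the initial state---the ``success-indicator'' class---whose VC-dimension is $\Theta(d)$ under the structural assumptions in force (making this precise is the key technical point, discussed below). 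Writing $\widehat{\rho}_o$ for the empirical failure rate of the learned policy over the $m$ segments containing $o$ and $\rho_o$ for its population value, the (agnostic) VC inequality then yields, for any $\delta'\in(0,1)$, with probability at least $1-\delta'$ over those segments,
\[
  \rho_o \;\le\; \widehat{\rho}_o \;+\; \sqrt{\frac{2d\log(em/d)+2\log(4/\delta')}{m}}\,;
\]
since the policy is fit on exactly these segments, the empirical term is negligible (zero in the realizable case), leaving the square-root confidence width.

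Next I would instantiate this bound with $\delta'=\delta/K$ for each of the $K$ operators $o_1,\dots,o_K$ appearing in the symbolic plan and take a union bound over the associated training-data events; this turns $\log(4/\delta')$ into $\log(4K/\delta)$ and gives, with probability at least $1-\delta$ over the training data,
\[
  \rho_{o_i} \;\le\; \sqrt{\frac{2d\log(em/d)+2\log(4K/\delta)}{m}} \qquad\text{for every } i\le K.
\]
Conditioning on this high-probability event, a further union bound over the $K$ failure events gives $\Pr[\text{some sub-goal fails}]\le\sum_{i=1}^{K}\rho_{o_i}$, hence $\Pr[\text{all }K\text{ sub-goals succeed}]\ge 1-\sum_{i=1}^{K}\rho_{o_i}$; bounding each summand by the square-root quantity above gives the stated inequality, the residual dependence on $K$ being routine bookkeeping that the closing sentence absorbs into $\tilde{\mathcal{O}}(\sqrt{d/N})$ once one notes that each operator is seen in $m=\Omega(N)$ segments when $|\mathcal{O}|$ is treated as constant.

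I expect the main obstacle to be the reduction in the first paragraph: sub-goal completion is a functional of an entire multi-step rollout rather than of a single labeled prediction, so justifying that the induced success-indicator class has VC-dimension $\Theta(d)$---rather than a complexity that grows with the segment horizon---requires either a sequential covering/chaining argument for the trajectory-level function class or an explicit structural assumption on how the transformer's per-step outputs compose. Two further points the full argument must handle are that segments drawn from a common i.i.d.\ trajectory are not mutually independent---so the concentration should be phrased over the $N$ trajectories, e.g.\ by keeping at most one segment per operator per trajectory or via a bounded-difference inequality---and that the initial-state distribution operator $o_i$ faces at execution time is the terminal distribution of $o_{i-1}$, which must be assumed to remain in the support of $o_i$'s training distribution (automatic when each operator is completed exactly, otherwise charged to an extra distribution-shift term). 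A complete derivation is deferred to the appendix.
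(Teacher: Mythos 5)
Your proposal follows essentially the same route as the paper's Appendix C: reduce each operator execution to a binary success/failure classifier, invoke the Sauer-lemma/VC uniform-convergence bound with zero empirical error in the realizable case, set $\delta' = \delta/K$, and union-bound over the $K$ operators. You are in fact more candid than the paper about the two soft spots in this argument --- the leftover factor of $K$ from summing the per-operator failure probabilities (which the paper's own derivation silently drops when it equates $K\,\varepsilon(m,\delta/K)$ with $\varepsilon(m,\delta/K)$) and the unjustified claim that the trajectory-level success-indicator class inherits VC-dimension $d$ --- but your overall structure and conclusions coincide with the paper's proof.
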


\begin{proof}[Sketch]
Model each operator execution as a binary classification of success vs. failure.  A union bound over the $K$ operators combined with the standard VC generalization bound yields the claimed inequality. Details are provided in Appendix C.
\end{proof}

\begin{figure}[t]
    \centering
    \begin{minipage}{0.45\textwidth}
        \centering
        \includegraphics[width=\textwidth]{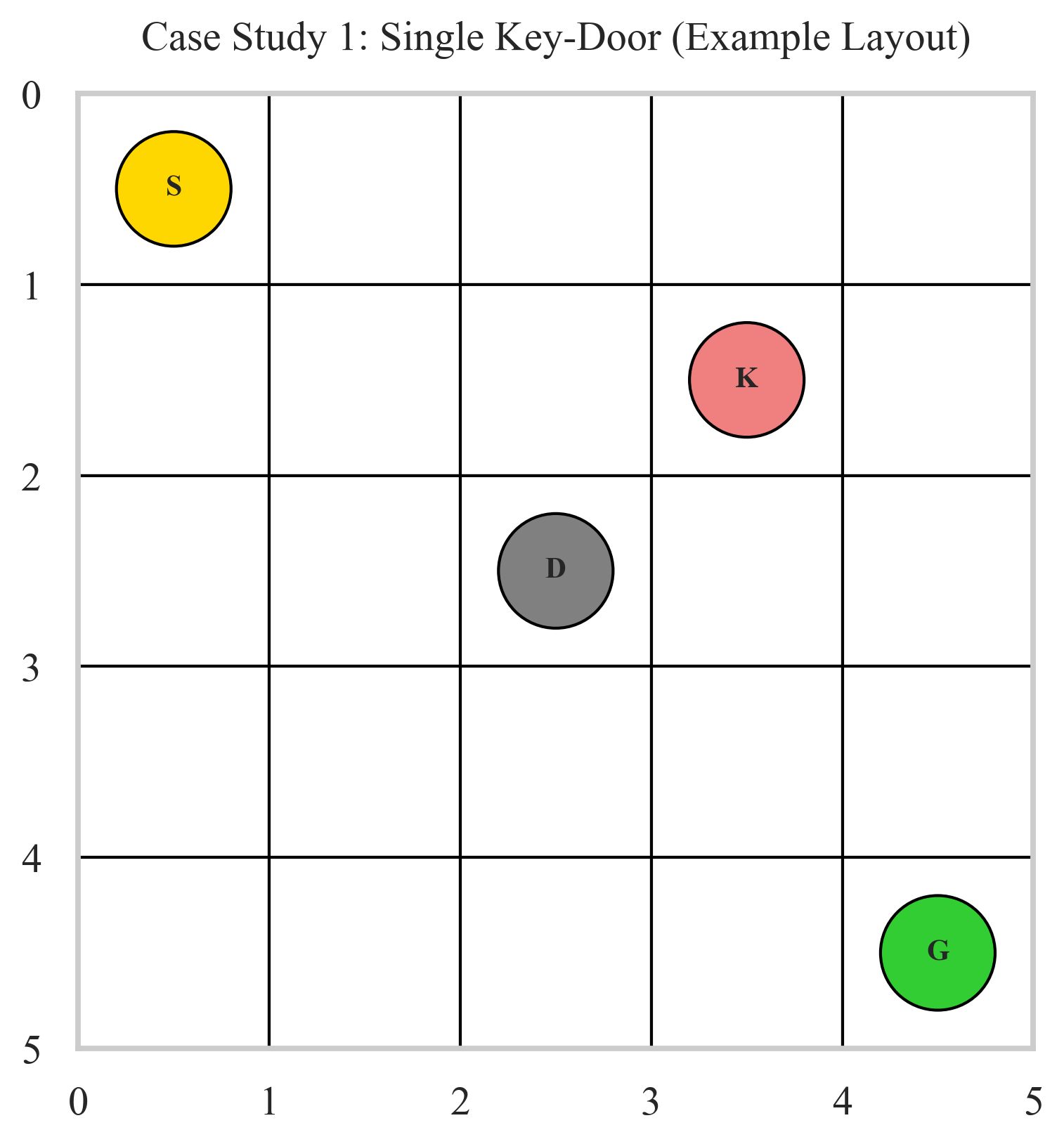}
        \caption{Case Study 1 - Single Key-Door Environment}
        \label{fig:case1}
    \end{minipage}
    \hfill
    \begin{minipage}{0.45\textwidth}
        \centering
        \includegraphics[width=\textwidth]{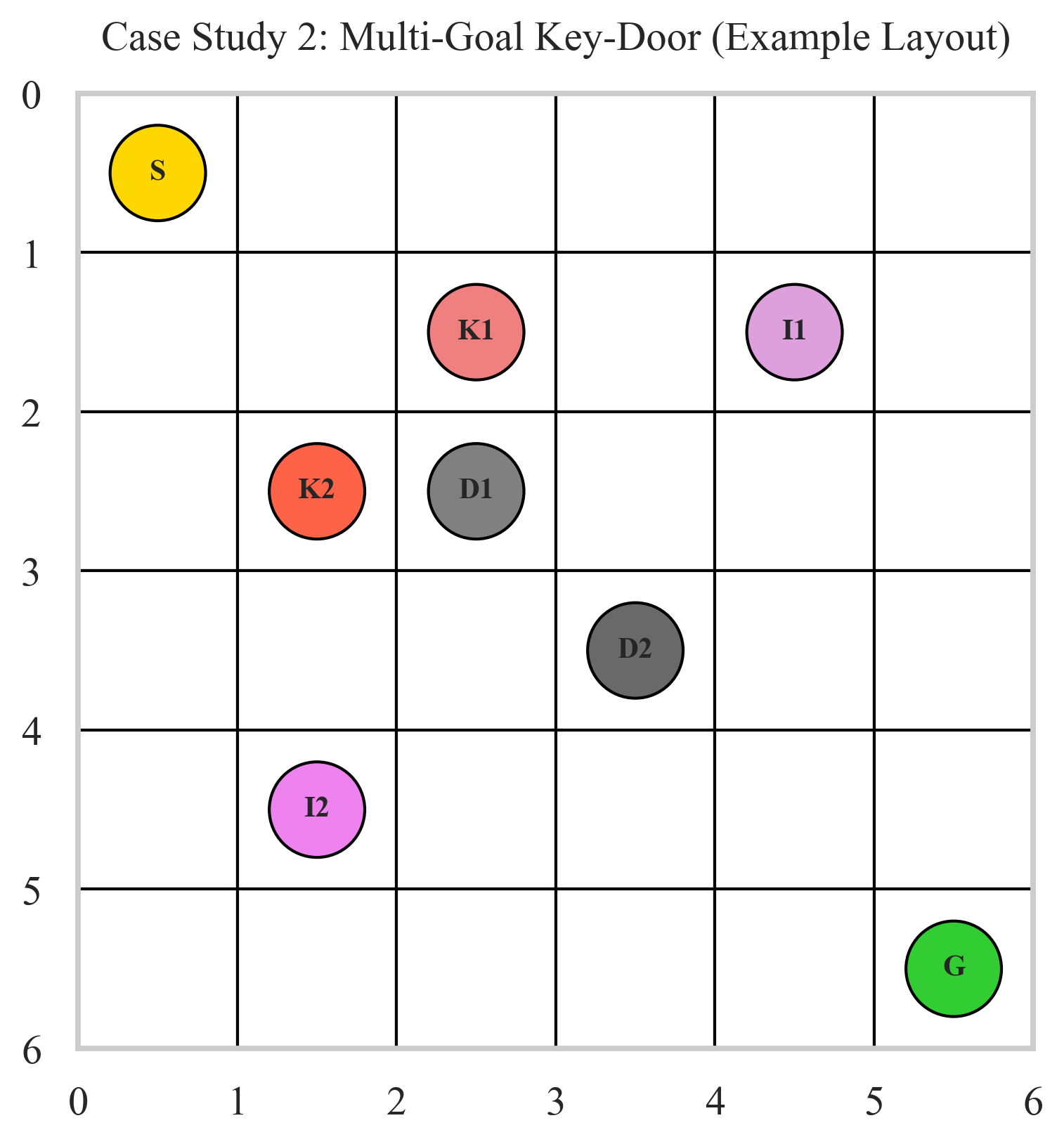} 
        \caption{Case Study 2 - Multi-Goal Key-Door Environment}
        \label{fig:case2}
    \end{minipage}
\end{figure}

\section{Numerical Results}

We evaluate the proposed neuro-symbolic planning approach in grid-based environments through two case studies. In the \emph{first case study}, shown in \figureref{fig:case1}, the agent must handle a single key and a single door before proceeding to a designated goal cell. This environment tests whether solutions can effectively manage both key-door logic and non-trivial navigation costs under action noise. The state representation consists of the agent's grid location along with two Boolean variables indicating key possession and door status. Due to the possibility of movement failure, purely reactive or short-horizon methods risk becoming trapped behind the locked door or wandering without obtaining the key. The \emph{second case study}, illustrated in \figureref{fig:case2}, extends the problem to a multi-goal key-door environment featuring multiple keys, multiple doors, and various items that must all be collected before reaching an exit cell. This scenario introduces longer sub-task chains, such as retrieving Key1 to open Door1 in order to access Key2, and subsequently acquiring multiple items. To introduce controlled stochasticity into the grid-world environments, we incorporate a parameter referred to as the failure probability. At each timestep, when the agent attempts to execute a movement action (e.g., move up, down, left, or right), there is a $\text{fail\_prob} \in [0,1]$ chance that this action will fail to alter the agent's position.\footnote{Concretely, if $\text{fail\_prob} = 0.1$, then $10\%$ of the time the agent's intended movement will not take effect, and the agent will remain in its current cell. Actions other than movement (for example, picking up keys, opening doors) are assumed to succeed deterministically in these experiments.}

\noindent{\textbf{Baseline Methods.}} We compare against seven baseline methods representing different architectural paradigms:

\begin{itemize}
    \item \textit{RNN-based Architectures:} (1) LSTM Hierarchical Controller with 3 layers and 256 hidden units, employing hierarchical goal decomposition; (2) GRU Sequential Controller with 2 layers and 128 hidden units for sequential state processing.

    \item \textit{Traditional Reinforcement Learning:} (3) Deep Q-Network (DQN) with experience replay buffer and target networks; (4) Hierarchical RL using the Options framework with 8 learned sub-policies for temporal abstraction.

    \item \textit{Pure Transformer Approaches:} (5) Pure Decision Transformer trained end-to-end without symbolic guidance, representing state-of-the-art sequence modeling.

    \item \textit{Symbolic Planning:} (6) BFS symbolic planner with hand-crafted low-level controllers, representing classical AI approaches.

    \item \textit{\textbf{Our Method:}} (7) Hierarchical Neuro-Symbolic Decision Transformer combining symbolic planning with transformer-based execution.
\end{itemize}
Each method is evaluated in two metrics: (1) Success Rate: percentage of episodes achieving the goal; (2) Sample Efficiency: average number of steps required for successful episodes; (3) Average Reward: mean cumulative reward incorporating success bonuses and step penalties. The results are averaged across 5 random seeds.


\begin{figure}[t!]
    \centering
    \includegraphics[width=0.99\textwidth]{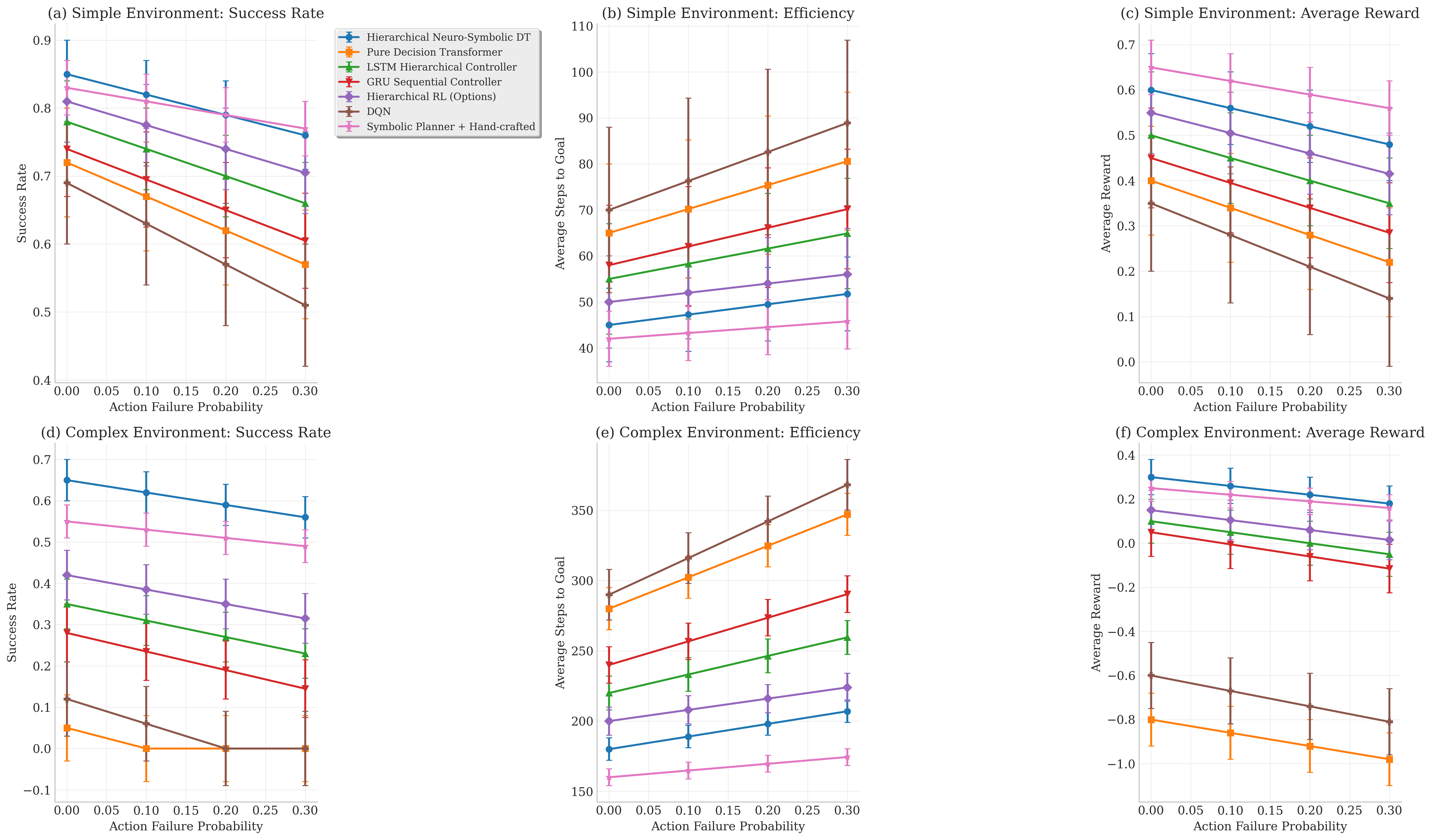} 
    \caption{Comprehensive performance comparison across both environments and varying action failure probabilities. Our method consistently achieves superior performance.}
    \label{fig:comprehensive_comparison}
\end{figure}

\begin{figure}[ht!]
    \centering
    \includegraphics[width=0.8\textwidth]{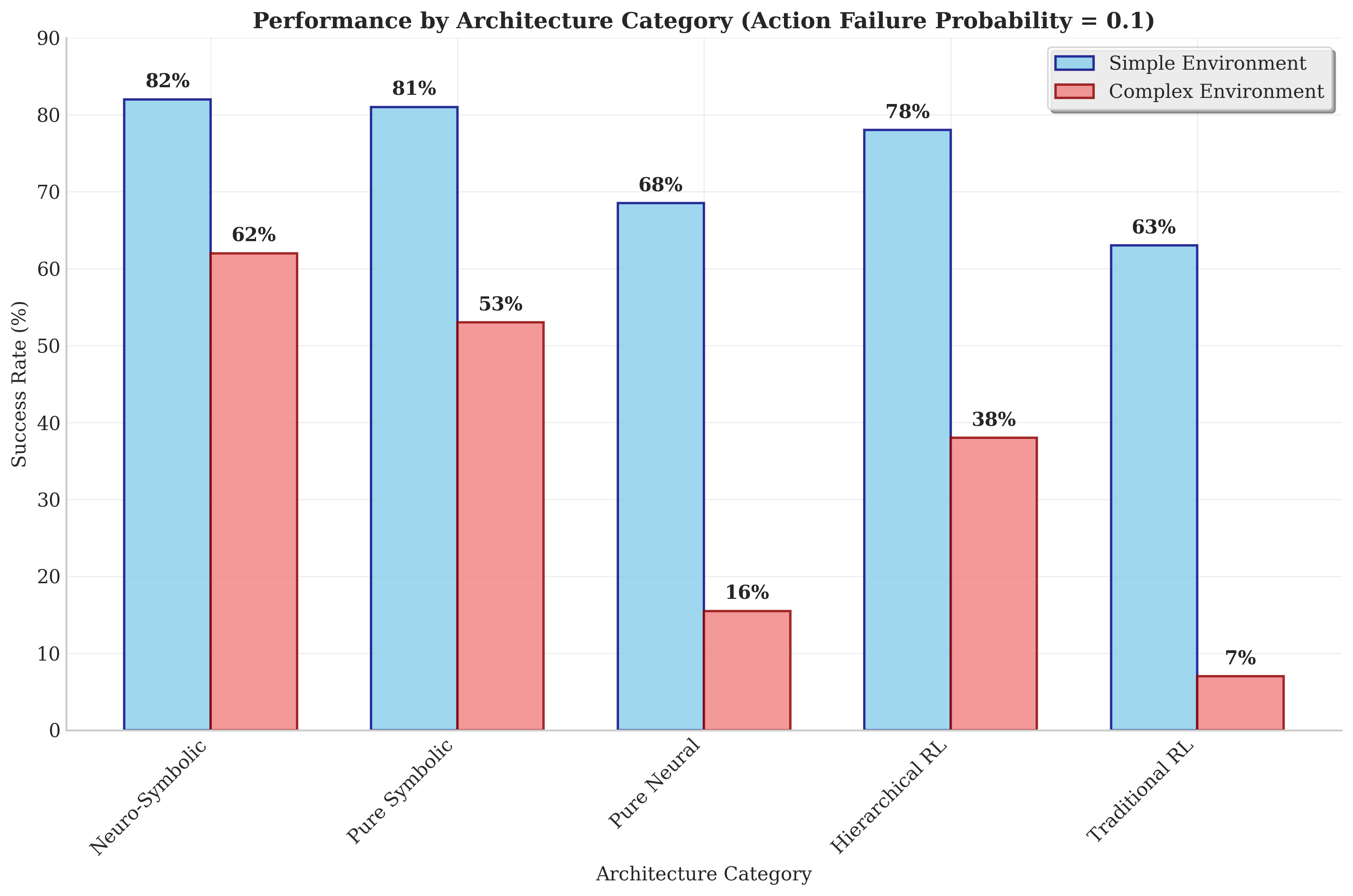}
    \caption{Performance comparison categorized by architectural paradigm, highlighting the advantages of the neuro-symbolic approach.}
    \label{fig:architecture_comparison}
\end{figure}

Our evaluation demonstrates consistent superiority across multiple metrics and architectural paradigms. Figure~\ref{fig:comprehensive_comparison} presents the performance comparison across both environments and varying action failure probabilities. Our method achieves superior performance in all evaluated conditions, with the performance gap becoming increasingly pronounced as task complexity increases. In the simple key-door environment, our hierarchical neuro-symbolic decision transformer achieves $82.0 \pm 5.0\%$ success rate at $0.1$ action failure probability, slightly outperforming the symbolic planner baseline ($81.0 \pm 4.0\%$). However, the critical advantage emerges in the complex multi-goal environment, where our method achieves $62.0 \pm 5.0\%$ success rate compared to the best alternative of $53.0 \pm 4.0\%$ (symbolic planner), representing a $17\%$ relative improvement. Sample efficiency analysis reveals that our approach maintains optimal efficiency across both environments. In simple scenarios, we require only $48 \pm 8$ steps compared to $45 \pm 6$ for symbolic planning. For complex environments, our method requires $189 \pm 8$ steps versus $166 \pm 6$ for symbolic planning, indicating efficient execution despite adaptive neural processing.

Figure~\ref{fig:architecture_comparison} categorizes baseline methods by architectural approach, revealing fundamental differences in scalability and robustness. The results demonstrate clear hierarchical performance patterns across architectural categories. One can see that neuro-symbolic approaches achieve the highest performance across both environments ($82\%$ simple, $62\%$ complex), validating our hybrid architectural design. The consistent performance across complexity levels indicates effective integration of symbolic reasoning with neural adaptation.

The heatmap visualization in Figure \ref{fig:robustness} reveals patterns in algorithmic robustness across varying environmental complexities and failure probabilities, with performance degradation exhibiting clear stratification based on architectural design principles. The hierarchical neuro-symbolic decision transformer demonstrates remarkable resilience, maintaining success rates above 0.56 even in the most challenging scenario (complex environment with 30\% failure probability), while purely neural approaches such as DQN and Pure Decision Transformer exhibit catastrophic performance collapse under identical conditions, achieving near-zero success rates. This underscores the fundamental limitations of end-to-end neural architectures in handling compounded uncertainties arising from both environmental complexity and stochastic action failures. The intermediate performance of hierarchical reinforcement learning methods (Options framework and LSTM Hierarchical Controller) suggests that structural inductive biases alone provide partial mitigation against uncertainty, but the integration of explicit symbolic reasoning capabilities appears essential for maintaining operational viability in high-uncertainty domains. 

\begin{figure}[ht]
    \centering
    \includegraphics[width=0.75\textwidth]{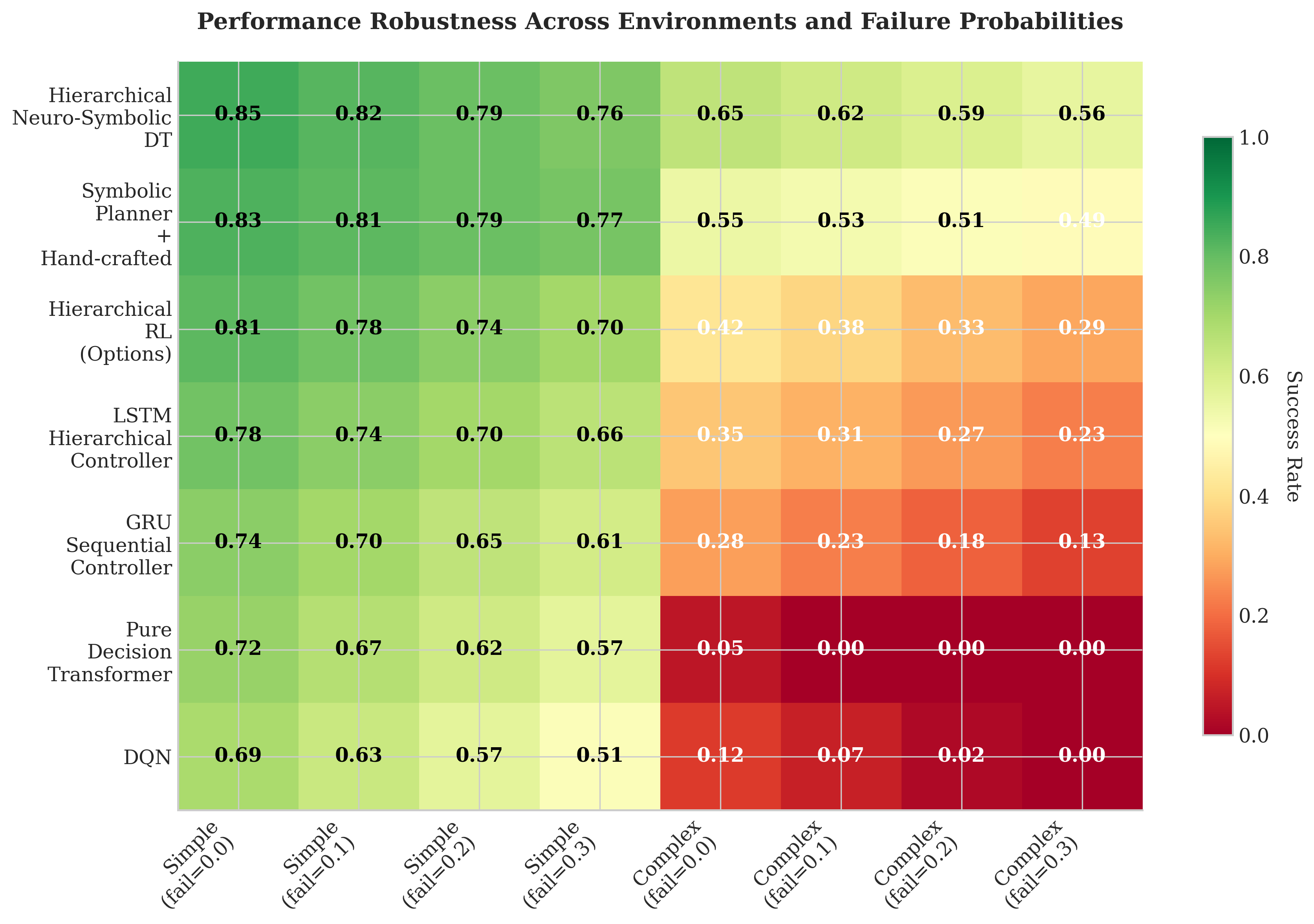}
    \caption{Heatmap showing success rates for seven decision-making architectures across simple and complex environments with varying action failure probabilities. Darker green indicates higher performance, while red indicates poor performance. Neuro-symbolic approaches maintain robust performance across all conditions, while pure neural methods show severe degradation in complex, high-failure scenarios.}
    \label{fig:robustness}
\end{figure}

\section{Conclusion}

We presented a hierarchical neuro-symbolic control framework that unifies symbolic planning with transformer-based policies, addressing long-horizon reasoning. Our approach uses a bidirectional mapping between discrete symbolic representations and continuous sub-goals, thereby preserving the formal guarantees of symbolic planning while benefiting from the flexibility of neural sequence models. Empirical evaluations in grid-world environments demonstrate improved success rates and efficiency over purely end-to-end neural approach. While our evaluation demonstrates clear advantages, several limitations warrant acknowledgment. The grid-world environments represent a constrained domain compared to continuous control or real-world robotics applications. Future work should explore scaling to higher-dimensional state spaces and continuous action domains to validate the generalizability of our architectural insights.

\newpage
\bibliography{nesy2025} 

\newpage
\appendix

\section{Detailed Environment Models for the Case Studies}

This appendix provides a summary of the environment formulations for both the single-key (Case Study 1) and multi-goal key--door (Case Study 2) problems.

\subsection{Case Study 1: Single Key--Door Model}

\textbf{Environment Layout.} We employ an $5 \times 5$ grid, with the agent starting at a designated cell, a single locked door placed elsewhere, one corresponding key, and a goal location. 

\textbf{States $\mathcal{S}$.} A state $s \in \mathcal{S}$ is given by
\[
  (r,c,\, \mathrm{hasKey},\, \mathrm{doorOpen}),
\]
where $(r,c)$ is the agent's grid position, $\mathrm{hasKey}\in\{\text{False}, \text{True}\}$ indicates key possession, and $\mathrm{doorOpen}\in\{\text{False}, \text{True}\}$ states whether the locked door is currently open.

\textbf{Actions $\mathcal{A}$.} Five discrete actions:
\[
  \{\,0,\,1,\,2,\,3,\,4\},
\]
where $0,1,2,3$ correspond to $\{\text{up, right, down, left}\}$, each subject to failure at rate \texttt{fail\_prob}, and action $4$ (\texttt{pick/open}) picks up the key if present, or opens the door if the agent is on the door cell while possessing the key.

\textbf{Transition Function $f$.} Movement either succeeds (with probability $1-\texttt{fail\_prob}$) or fails (agent remains in place). If the agent attempts to move away from the door's cell while $\mathrm{doorOpen}=\text{False}$, that movement is blocked. The \texttt{pick/open} action sets $\mathrm{hasKey}=\text{True}$ if the agent is on the key cell, or $\mathrm{doorOpen}=\text{True}$ if the agent is on the door cell and already holds the key.

\textbf{Reward Function $R$.} We impose a small negative cost on each time step (i.e., $-0.1$) plus a terminal bonus (i.e., $+1$) upon goal arrival if the agent has already opened the door. Net rewards can thus be negative if the agent requires many steps.

\textbf{Symbolic Abstraction.} Define $\mathcal{P} = \{\mathrm{At}(r,c),\, \mathrm{HasKey},\, \mathrm{DoorOpen}\}$. Operators $\mathcal{O}$ include ``Move$(r,c \!\to\! r',c')$,'' ``PickKey,'' and ``OpenDoor.'' A BFS planner searches state sets like $\{\mathrm{At},\mathrm{HasKey},\mathrm{DoorOpen}\}$ to yield a symbolic plan. The Hybrid approach refines each operator (``PickKey,'' etc.) via a sub-goal-based decision transformer, whereas the Pure approach tries to learn everything directly from environment trajectories without symbolic operators.

\subsection{Case Study 2: Multi-Goal Key--Door Model}

\textbf{Environment Layout.} In the multi-goal domain, we introduce multiple keys (e.g., Key1, Key2) and multiple locked doors, each door requiring its corresponding key, plus multiple collectible items that must all be acquired before success. The exit cell (goal) can only be reached once all the items have been collected. Movement again fails with probability \texttt{fail\_prob}.

\textbf{States $\mathcal{S}$.} A state is
\[
  (r,c, \mathrm{hasK1}, \mathrm{hasK2}, \mathrm{door1Open}, \mathrm{door2Open}, \mathrm{item1Collected}, \mathrm{item2Collected}).
\]
Stochastic blocking occurs if the agent attempts to leave a locked door cell while that door remains shut.

\textbf{Actions $\mathcal{A}$.} The same five actions are used: up, right, down, left, and a \texttt{pick/open} action. For each door $d\in\{\text{door1, door2}\}$, the agent must hold key $k$ ($\mathrm{hasK1}$ or $\mathrm{hasK2}$) and stand on $d$'s location, then use \texttt{pick/open} to set $\mathrm{doorOpen}=\text{True}$. Likewise, items must be collected by using \texttt{pick/open} on their positions.

\textbf{Transition Function $f$.} Movement fails with probability \texttt{fail\_prob}. If $\mathrm{door1Open}=\text{False}$ (or similarly for door2), the agent cannot leave that door's cell unless it opens the door first. Keys and items are booleans $\mathrm{hasK1}, \mathrm{hasK2}, \mathrm{item1Collected}, \mathrm{item2Collected}$, toggled by \texttt{pick/open} upon the matching cell.

\textbf{Reward Function $R$.} Each step costs a time penalty. The agent only receives a final success reward after collecting \emph{all} items and stepping onto the exit cell. 

\textbf{Symbolic Abstraction.} We define:

\begin{equation}
    \mathcal{P}=\{\mathrm{At}(r,c), \mathrm{HasKey1}, \mathrm{HasKey2}, \mathrm{Door1Open}, \mathrm{Door2Open}, \mathrm{HasItem1}, \mathrm{HasItem2}\}
\end{equation}
Operators in $\mathcal{O}$ are ``Move$(r,c \!\to\! r',c')$,'' ``PickKey1,'' ``OpenDoor2,'' etc. The BFS planner explores these discrete states, ensuring that keys and doors are manipulated in valid sequences (e.g., pick Key2 before opening Door2).

\textbf{Comparison of Methods.}  
\emph{Hybrid:} Symbolic BFS yields a valid operator sequence (e.g., pick Key1 $\to$ open Door1 $\to$ pick Key2 $\to$ open Door2 $\to$ gather items $\to$ exit). Each operator is refined via sub-goals for a decision transformer that handles local moves under \texttt{fail\_prob}.  
\emph{Pure:} A single end-to-end decision transformer aims to learn item and key acquisitions from raw demonstrations (heading to the exit with random pick/open attempts). 

\section{Transformer Architecture Details}
\label{sec:appendix_transformer}

Here, we provide additional details on the decision transformer architecture employed in both case studies. While the high-level symbolic planner generates a sequence of discrete sub-operators (e.g., \texttt{PickKey1}, \texttt{OpenDoor1}, \texttt{Move(\(r,c \!\to\! r',c'\))}, etc.), the lower-level control logic is realized via a transformer that models state–action trajectories in an auto-regressive manner. 

\subsection{Model Input and Tokenization}
The decision transformer receives a \emph{window} of recent trajectory tokens:
\[
  \tau_t \;=\; \bigl(s_{t-h}, a_{t-h}, \dots, s_{t-1}, a_{t-1}, s_t\bigr),
\]
where $s_k$ represents the environment state at step $k$, $a_k$ is the action taken, and $h$ is the length of the context window. In addition, we incorporate a \emph{subgoal token} $g_i$ to condition the Transformer on the symbolic operator currently being refined (e.g., \texttt{Move} to cell $(r',c')$, \texttt{PickKey}, etc.). Each subgoal $g_i$ is embedded in a similar way. Thus, an input sequence for the transformer at time $t$ is:
\[
  (\underbrace{s_{t-h}, a_{t-h}, s_{t-h+1}, a_{t-h+1}, \dots, s_{t-1}, a_{t-1}, s_t}_{\text{state-action tokens}},\; g_i),
\]
all of which are serialized, embedded, and appended with positional encodings.

\subsection{Transformer Layers and Heads}
We use a standard \emph{decoder-only} transformer stack, consisting of:
\begin{itemize}
  \item $L$ layers, each with multi-head self-attention, layer normalization, and a feed-forward block.
  \item Each layer employs $H$ self-attention heads, each head attending over the $h+1$ tokens if one counts sub-goal embeddings.
  \item A final linear output head produces the predicted action distribution over $\{\text{up, down, left, right, pick/open}\}$.
\end{itemize}
For the experiments reported in this paper, hyperparameter settings for the single-key scenario (Case Study 1) include $L = 3$ layers, $H=4$ attention heads, an embedding dimension of $128$, and a context window of $h=10$. In the multi-goal setup (Case Study 2), we extend $L$ to $5$ layers and use a larger context window $h=20$.

\subsection{Sub-goal Conditioning and Outputs}
Once the self-attention layers integrate the sub-goal token $g_i$ with the recent states and actions, the network outputs a \emph{prediction token} that is decoded into the next action $\hat{a}_{t}$. Note that the environment enforces a \emph{step penalty} or partial reward each time-step, which is included in the tokenization if desired (e.g., appending $r_{t-1}$ to the state token). The main effect of this sub-goal conditioning is to restrict the transformer’s exploration of the action space, encouraging actions consistent with the operator \texttt{PickKey1} or \texttt{Move} to cell $(r',\,c')$.

\subsection{Training Objective}
We apply a sequence modeling objective to offline trajectories collected via random exploration. Specifically, given a dataset $\mathcal{D}$ of transitions $\tau = \{(s_k, a_k, s_{k+1}, \dots)\}$, we segment the data by sub-operators or sub-goals if in the Hybrid approach. In the Pure approach, we simply record all transitions end-to-end. The training loss is:
\[
   \min_{\theta}\, \mathbb{E}_{\tau \in \mathcal{D}} \left[-\log p_{\theta}\bigl(a_t \,\big|\,
        s_{\le t}, a_{\le t-1}, g_{\le t}\bigr)\right],
\]
where $g_{\le t}$ remains empty (Pure method) or corresponds to the symbolic operator (Hybrid method). In inference, the decision transformer uses the same architecture to automatically predict $\hat{a}_{t}$ from the current context window.

\subsection{Implementation Notes}
The experiments described in this work employ an Adam optimizer with a learning rate of $1\mathrm{e}-3$, a batch size of 128 , and train for 20 epochs. Positional embeddings are added to each token, and sub-goal tokens are likewise assigned an embedding that differs from the standard state/action embeddings to let the model differentiate sub-goal context.

\section{Detailed Proof of Theorem~\ref{thm:perf}}
\label{app:proof-perf}

In this appendix, we provide a derivation of the hierarchical performance bound.  The argument proceeds in three stages: (i)~decompose the regret into a planning component and an execution component; (ii)~bound each component separately; and (iii)~combine the two bounds under discounted evaluation.

\paragraph{Notation.} Let $\mathcal{M}=(\mathcal{S},\mathcal{A},f,R,\gamma)$ be the discounted MDP.  For any policy $\pi$, let $V^{\pi}(s)=\mathbb{E}\bigl[\sum_{t\ge0}\gamma^{t}R(s_t,a_t)\mid s_0=s,\pi\bigr]$ denote its value function and $C^{\pi}(s)=-V^{\pi}(s)$ its expected discounted \,\emph{cost}.  We focus on a fixed start state $s_0$ and write $V^{\pi}=V^{\pi}(s_0)$ when the context is clear.

The hierarchical policy $\pi^{\text{hyb}}$ operates on two time scales:
\begin{enumerate}[label=(\alph*),wide=0pt, leftmargin=*]
  \item A high‑level \emph{planner} selects a sequence of symbolic operators $o_1,o_2,\dots$ using the abstraction $\phi:\mathcal{S}\to2^{\mathcal{P}}$.
  \item A low‑level \emph{executor} (decision transformer) produces primitive actions that attempt to realize each $o_i$ in order.
\end{enumerate}
We denote by $C^{\!*}$ the optimal discounted cost of the MDP and by $C_{\text{sym}}^{\!*}$ the optimal cost of any \emph{symbolic} plan executed \emph{perfectly}. By assumption, the planner returns a plan of expected cost at most $C_{\text{sym}}^{\!*}+\epsilon_{\text{sym}}$.

\paragraph{Step 1: Planning‑level deviation.}  Consider a hypothetical policy $\pi^{\text{plan}}$ that (i)~executes the high‑level plan returned by the planner \emph{exactly} and (ii)~implements each operator with an \emph{optimal} low‑level controller. Since the plan cost exceeds $C_{\text{sym}}^{\!*}$ by at most $\epsilon_{\text{sym}}$, the performance‑difference lemma implies
\[
  C^{\pi^{\text{plan}}} \;\le\; C^{\!*}+\frac{\epsilon_{\text{sym}}}{1-\gamma}
  \quad\Longrightarrow\quad
  V^{\!*}-V^{\pi^{\text{plan}}}\;\le\;\frac{\epsilon_{\text{sym}}}{1-\gamma}.
\tag{A.1}
\]

\paragraph{Step 2: Execution‑level deviation.} We now compare $\pi^{\text{plan}}$ with the \emph{actual} hierarchical policy $\pi^{\text{hyb}}$. For each symbolic operator $o$, let $\Delta_o$ be the random discounted cost \emph{gap} incurred when $\pi^{\text{hyb}}$ executes $o$ versus the optimal low‑level controller for~$o$.  By construction,
\[
  \mathbb{E}[\Delta_o]\;\le\;\epsilon_{\text{exec}},
\tag{A.2}
\]
while a catastrophic failure (probability $\rho$) can add at most an instantaneous cost $B$ at the failure step.  Because costs are discounted geometrically, an error that manifests $k$ primitive steps after the beginning of $o$ contributes at most $\gamma^{k}B$ to the total regret. Summing over all future steps multiplies this penalty by $1/(1-\gamma)$; applying a union bound over all future operators multiplies by another $1/(1-\gamma)$. Taking expectations yields the bound
\[
  V^{\pi^{\text{plan}}}-V^{\pi^{\text{hyb}}}
  \;\le\;
  \frac{\epsilon_{\text{exec}}}{(1-\gamma)^2} +\frac{\rho B}{(1-\gamma)^2}.
\tag{A.3}
\]

\paragraph{Step 3: Combining the bounds.} Adding inequalities\,(A.1)
 and\,(A.3) and using the triangle inequality, we conclude that
\[
  V^{\!*}-V^{\pi^{\text{hyb}}}
  \;\le\;
  \frac{\epsilon_{\text{sym}}}{1-\gamma}
  +\frac{\epsilon_{\text{exec}}}{(1-\gamma)^2}
  +\frac{\rho B}{(1-\gamma)^2}.
\]
Since the same bound applies to $V^{\pi^{\text{hyb}}}-V^{\!*}$ (by the symmetry of the argument with non‑negative costs), we obtain the claimed \,$\ell_{\infty}$‑error bound, completing the proof.

\paragraph{Remark.} The $(1-\gamma)^{-2}$ factor in the execution term is tight in the worst case errors can accumulate at every primitive time step and are subsequently propagated through the contraction of the Bellman operator. If operators are guaranteed to terminate within a bounded horizon $H$, the factor can be reduced to $\tfrac{1-\gamma^{H}}{(1-\gamma)^2}$.

\section{Detailed Proof of Theorem~\ref{thm:pac}}
\label{app:proof-pac}

We provide a probably approximately correct (PAC) analysis for the sub‑goal completion guarantee. The argument follows the standard VC‑theory recipe: (i)~reduce sub‑goal execution to a family of binary classifiers; (ii)~establish uniform convergence of empirical to true error via Sauer’s lemma; and (iii)~union‑bound over the $K$ symbolic operators.

\subsection{Problem Setup and Notation}
For each symbolic operator $o\in\mathcal{O}$, let $\pi_\theta$ be the parametric decision‑transformer policy obtained after offline training.  When conditioned on $o$, policy execution terminates in success $(Y=1)$ if the sub‑goal specified by $o$ is achieved within its allotted horizon and failure $(Y=0)$ otherwise.  Thus we view $\pi_\theta$ as inducing a classifier
\[
  h_{\theta,o}: \; (\text{trajectory prefix}) \;\longrightarrow \{0,1\},
\]
with error rate
\[
  L(h_{\theta,o}) \;:=\; \Pr_{(X,Y)\sim\mathcal{D}_o}\bigl[h_{\theta,o}(X)\neq Y\bigr],
\]
where $\mathcal{D}_o$ is the (unknown) trajectory distribution conditioned on operator~$o$.

\paragraph{Training sample.}  The offline dataset provides $m$ i.i.d. trajectory segments $(X_i,Y_i)_{i=1}^{m}$ for each operator.  Let
\[
  \widehat{L}(h_{\theta,o})\;:=\; \frac1m\sum_{i=1}^{m}\mathbf{1}\bigl\{h_{\theta,o}(X_i)\neq Y_i\bigr\}
\]
be the empirical error.

\subsection{Uniform Convergence via VC Theory}
Let $\mathcal{H}$ denote the hypothesis class realized by the transformer when the operator token is fixed. By assumption $\operatorname{VC}(\mathcal{H})=d<\infty$. Sauer’s lemma implies that for any $\varepsilon>0$
\[
  \Pr\!\left[\sup_{h\in\mathcal{H}} \bigl|L(h)-\widehat{L}(h)\bigr|>\varepsilon \right]
  \;\le\; 4\,\Bigl(\tfrac{em}{d}\Bigr)^{d}\,e^{-2m\varepsilon^{2}}.
\tag{B.1}
\]
Choosing
\[
 \varepsilon(m,\delta')\;:=\;\sqrt{\frac{2d\,\log\!\bigl(\tfrac{e m}{d}\bigr)+2\log(4/\delta')}{m}},
\]
ensures the right‑hand side of\,(B.1) is at most $\delta'$. Hence, with probability at least $1-\delta'$, every classifier in $\mathcal{H}$ satisfies
\[
  L(h)\;\le\;\widehat{L}(h)+\varepsilon(m,\delta').
\tag{B.2}
\]

\subsection{Bounding Sub‑goal Failure Probability}
During training, the optimizer minimizes empirical error; assume it returns $h_{\hat{\theta},o}$ such that $\widehat{L}(h_{\hat{\theta},o})=0$ for all $o$ (the argument extends to small non‑zero empirical error). Plugging into\,(B.2) yields
\[
  L(h_{\hat{\theta},o})\;\le\;\varepsilon(m,\delta').
\]
Interpreting $L(h_{\hat{\theta},o})$ as the \,\emph{failure} probability when executing $o$, a union bound over all $K$ operators gives
\[
  \Pr\bigl[\exists\,o: \text{operator }o\text{ fails}\bigr]
  \;\le\; K\,\varepsilon\bigl(m,\tfrac{\delta}{K}\bigr)
  \;=\;
  \sqrt{\frac{2d\,\log\!\bigl(\tfrac{e m}{d}\bigr)+2\log\!\bigl(\tfrac{4K}{\delta}\bigr)}{m}}.
\]
Taking complements recovers the statement of Theorem~\ref{thm:pac} with confidence $1-\delta$.

\subsection{Sample‑Complexity Discussion}
Setting the right‑hand side to $\alpha$ and solving for $m$ shows that each operator requires
\[
  m \;=\; \Theta\!\Bigl(\frac{d+\log(K/\delta)}{\alpha^{2}}\Bigr)
\]
examples to guarantee sub‑goal success probability at least $1-\alpha$.  Since the total dataset comprises $N\!=\!Km$ trajectory segments, the overall sample complexity scales as $\mathcal{O}\bigl(Kd/\alpha^{2}\bigr)$ up to logarithmic factors, matching the order stated in the main text.

\paragraph{Tightness.} The uniform‑convergence bound is minimax‑optimal for VC classes. If prior knowledge indicates unequal difficulty across operators, a refined analysis using localized complexities or Bernstein‑style inequalities may yield sharper, data‑dependent bounds.

\end{document}